\icmltitlerunning{Haar Graph Pooling}
\newcommand{\gph}{\mathcal{G}} 
\newcommand{\eigfm}[1][\ell]{\phi_{#1}}
\newcommand{\ws}[1][j]{\mathcal{S}^{(#1)}} 
\newcommand{\vj}[1][j]{v^{(#1)}} 
\newcommand{\wj}[1][j]{w^{(#1)}} 
\newtheorem{theorem}{Theorem}
\newtheorem{proposition}[theorem]{Proposition}
\newtheorem{definition}[theorem]{Definition}
\def\eqref#1{equation~\ref{#1}}
\def\1{\bm{1}}
\DeclareMathAlphabet{\mathsfit}{\encodingdefault}{\sfdefault}{m}{sl}
\SetMathAlphabet{\mathsfit}{bold}{\encodingdefault}{\sfdefault}{bx}{n}
\newcommand{\R}{\mathbb{R}}
\newcommand{\parents}{Pa} 
\begin{document}

\twocolumn[
\icmltitle{Haar Graph Pooling}

\icmlsetsymbol{equal}{*}

\begin{icmlauthorlist}
\icmlauthor{Yu Guang Wang}{equal,unsw,mpi}
\icmlauthor{Ming Li}{equal,zjnu}
\icmlauthor{Zheng Ma}{equal,princeton}
\icmlauthor{Guido Mont\'{u}far}{equal,ucla,mpi}
\icmlauthor{Xiaosheng Zhuang}{cityu}
\icmlauthor{Yanan Fan}{unsw}
\end{icmlauthorlist}

\icmlaffiliation{unsw}{School of Mathematics and Statistics,
 University of New South Wales, Sydney, Australia}
\icmlaffiliation{zjnu}{Department of Educational Technology, Zhejiang Normal University, Jinhua, China}
\icmlaffiliation{princeton}{Department of Physics, Princeton University, New Jersey, USA}
\icmlaffiliation{ucla}{Department of Mathematics and Department of Statistics, University of California, Los Angeles}
\icmlaffiliation{mpi}{Max Planck Institute for Mathematics in the Sciences, Leipzig, Germany}
\icmlaffiliation{cityu}{Department of Mathematics, City University of Hong Kong, Hong Kong}

\icmlcorrespondingauthor{Ming Li}{mingli@zjnu.edu.cn}
\icmlcorrespondingauthor{Yu Guang Wang}{yuguang.wang@unsw.edu.au}
\icmlcorrespondingauthor{Guido Mont\'{u}far}{montufar@math.ucla.edu}

\icmlkeywords{Machine Learning, ICML}

\vskip 0.3in
]

\printAffiliationsAndNotice{\icmlEqualContribution}

\begin{abstract}
Deep Graph Neural Networks (GNNs) are useful models for graph classification and graph-based regression tasks. In these tasks, graph pooling is a critical ingredient by which GNNs adapt to input graphs of varying size and structure. We propose a new graph pooling operation based on compressive Haar transforms --- \emph{HaarPooling}.
HaarPooling implements a cascade of pooling operations; it is computed by following a sequence of clusterings of the input graph.
A HaarPooling layer transforms a given input graph to an output graph with a smaller node number and the same feature dimension; the compressive Haar transform filters out fine detail information in the Haar wavelet domain.
In this way, all the HaarPooling layers together synthesize the features of any given input graph into a feature vector of uniform size. Such transforms provide a sparse characterization of the data and preserve the structure information of the input graph.
GNNs implemented with standard graph convolution layers and HaarPooling layers achieve state of the art performance on diverse graph classification and regression problems.
\end{abstract}

\section{Introduction}
Graph Neural Networks (GNNs) have demonstrated excellent performance in node classification tasks and are very promising in graph classification and regression \citep{Bronstein_etal2017,Survey_Battaglia,Survey_ZhuWW,Survey_SunMS,Survey_ZhangCQ}.
In node classification, the input is a single graph with missing node labels that are to be predicted from the known node labels.
In this problem, GNNs with appropriate graph convolutions can be trained based on a single input graph, and achieve state-of-the-art performance \citep{DeBrVa2016,KiWe2017,MaLiWa2019}.
Different from node classification, graph classification is a task where the label of any given graph-structured sample is to be predicted based on a training set of labeled graph-structured samples.
This is similar to the image classification task tackled by traditional deep convolutional neural networks.
The significant difference is that here each input sample may have an arbitrary adjacency structure instead of the fixed, regular grids that are used in standard pixel images.
This raises two crucial challenges:
1)~How can GNNs exploit the graph structure information of the input data?
2)~How can GNNs handle input graphs with 
varying number of nodes and connectivity structures?

These problems have motivated the design of proper \emph{graph convolution} and \emph{graph pooling} to allow GNNs to capture the geometric information of each data sample \citep{zhang2018end,ying2018hierarchical,cangea2018towards, gao2019graph, knyazev2019understanding,ma2019graph,lee2019self}. Graph convolution plays an important role, especially in question 1).

The following is a widely utilized type of graph convolution layer, proposed by \cite{KiWe2017}:
\begin{equation}\label{eq:GCNConv}
    X^{\rm out} = \widehat{A}X^{\rm in} W.
\end{equation}
Here $\widehat{A}=\widetilde{D}^{-1/2}(A+I)\widetilde{D}^{-1/2}\in \R^{N\times N}$ is a normalized version of the adjacency matrix $A$ of the input graph, where $I$ is the identity matrix and $\widetilde{D}$ is the degree matrix for $A+I$. Further, $X^{\rm in}\in \R^{N\times d}$ is the array of $d$-dimensional features on the $N$ nodes of the graph, and $W\in\R^{d\times m}$ is the filter parameter matrix.
We call the graph convolution of \citet{KiWe2017} in Equation~(\ref{eq:GCNConv}) the \emph{GCN convolution}.

The graph convolution in Equation~(\ref{eq:GCNConv}) captures the structural information of the input in terms of $A$ (or $\widehat{A}$),
and $W$ transforms the feature dimension from $d$ to $m$. As the filter size $d\times m$ is independent of the graph size, it allows a fixed network architecture to process input graphs of varying sizes.
The GCN convolution preserves the number of nodes, and hence the output dimension of the network is not unique.
Graph pooling provides an effective way to overcome this obstacle.
Some existing approaches, EigenPooling  \citep{ma2019graph}, for example, incorporate both features and graph structure, which gives very good performance on graph classification.

In this paper, we propose a new graph pooling strategy based on a sparse Haar representation of the data, which we call \emph{Haar Graph Pooling}, or simply \emph{HaarPooling}. It is built on the \emph{Haar basis} \citep{WaZh2019,wang2020tight,LiHANet2019}, which is a localized wavelet-like basis on a graph.
We define HaarPooling by the compressive Haar transform of the graph data, which is a nonlinear transform that operates in the Haar wavelet domain by using the Haar basis. For an input data sample, which consists of a graph $\mathcal{G}$ and the feature on the nodes $X^{\rm in}\in \R^{N \times d}$, the compressive Haar transform is determined by the Haar basis vectors on $\mathcal{G}$ and maps $X^{\rm in}$ into a matrix $X^{\rm out}$ of dimension $d\times N_1$. The pooled feature $X^{\rm out}$ is in the Haar wavelet domain and extracts the coarser feature of the input.
Thus, HaarPooling can provide sparse representations of graph data that distill structural graph information.

The Haar basis and its compressive transform can be used to define cascading pooling layers, i.e., for each layer, we define an orthonormal Haar basis and its compressive Haar transform. Each HaarPooling layer pools the graph input from the previous layer to output with a smaller node number and the same feature dimension.
In this way, all the HaarPooling layers together synthesize the features of all graph input samples into feature vectors with the same size.
We then obtain an output of a fixed dimension, regardless of the size of the input.

The algorithm of HaarPooling is simple to implement as the Haar basis and the compressive Haar transforms can be computed by the explicit formula. The computation of HaarPooling is cheap, with nearly linear time complexity. HaarPooling can connect to any graph convolution. The GNNs with HaarPooling can handle multiple tasks.
Experiments in Section~\ref{sec:exp} demonstrate that the GNN with HaarPooling achieves state of the art performance on various graph classification and regression tasks.

\section{Related Work}\label{sec:relatedwork}
Graph pooling is a vital step when building a GNN model for graph classification and regression, as one needs a unified graph-level rather than node-level representation for graph-structured inputs of which size and topology are changing. The most direct pooling method, as provided by the graph convolutional layer \citep{duvenaud2015convolutional},  takes the global mean and sum of the features of the nodes as a simple graph-level representation. This pooling operation treats all the nodes equally and uses the global geometry of the graph.
ChebNet \citep{DeBrVa2016} used a graph coarsening procedure to build the pooling module, for which one needs a graph clustering algorithm to obtain subgraphs.  One drawback of this topology-based strategy is that it does not combine the node features in the pooling.
The global pooling method considers the information about node embeddings, which can achieve the entire graph representation.
As a general framework for graph classification and regression problems, MPNN \citep{Gilmer_etal2017} used the Set2Set method \citep{vinyals2015order} that would obtain a graph-level representation of the graph input samples.  The SortPool \citep{zhang2018end} proposed a method that could rank and select the nodes by sorting their feature representation and then feed them into a traditional 1-D convolutional or dense layer. These global pooling methods did not utilize the hierarchical structure of the graph, which may carry useful geometric information of data.

One notable recent argument is to build a differentiable and data-dependent pooling layer with learnable operations or parameters, which has brought a substantial improvement in graph classification tasks.
The DiffPool \citep{ying2018hierarchical} proposed a differentiable pooling layer that learns a cluster assignment matrix over the nodes relating to the output of a GNN model. One difficulty of DiffPool is its vast storage complexity, which is due to the computation of the soft clustering.
The TopKPooling \cite{cangea2018towards, gao2019graph, knyazev2019understanding} proposed a pooling method that samples a subset of essential nodes by manipulating a trainable projection vector. The Self-Attention Graph Pooling (SAGPool) \citep{lee2019self} proposed an analogous pooling that applied the GCN module to compute the node scores instead of the projection vector in the TopKPooling. These hierarchical pooling methods technically still employ mean/max pooling procedures to aggregate the feature representation of super-nodes. To preserve more edge information of the graph, EdgePool \cite{diehl2019towards} proposed to incorporate edge contraction. The StructPool \cite{Yuan2020StructPool} proposed a graph pooling that employed conditional random fields to represent the relation of different nodes.

The spectral-based pooling method suggests another design, which operates the graph pooling in the frequency domain, for example, the Fourier domain or the wavelet domain.
By its nature, the spectral-based approach can combine the graph structure and the node features. The Laplacian Pooling (LaPool) \citep{noutahi2019towards} proposed a pooling method that dynamically selected the centroid nodes and their corresponding follower nodes by using a graph Laplacian-based attention mechanism.  EigenPool \citep{ma2019graph} introduced a graph pooling that used the local graph Fourier transform to extract subgraph information. Its potential drawback lies in the inherent computing bottleneck for the Laplacian-based graph Fourier transform, given the high computational cost for the eigendecomposition of the graph Laplacian. Our HaarPooling is a spectral-based method that applies the Haar basis system in the node feature representation, as we now introduce.

\section{Haar Graph Pooling}
\label{sec:overview.haarpool}
In this section, we give an overview of the proposed HaarPooling framework. First we define the pooling architecture in terms of a coarse-grained chain, i.e., a sequence of graphs $(\gph_0, \gph_1, \ldots, \gph_K)$, where the nodes of the $(j+1)$th graph $\gph_{j+1}$ correspond to the clusters of nodes of the $j$th graph $\gph_{j}$ for each $j=0,\ldots,K-1$.
Based on the chain, we construct the Haar basis and the compressive Haar transform. The latter then defines the HaarPooling operation. Each layer in the chain determines which sets of nodes the network pools, and the compressive Haar transform synthesizes the information from the graph and node feature for pooling.

\paragraph{Chain of coarse-grained graphs for pooling}
Graph pooling amounts to defining a sequence of coarse-grained graphs.
In our chain, each graph is an induced graph that arises from grouping (clustering) certain subsets of nodes from the previous graph.
We use clustering algorithms to generate the groupings of nodes.
There are many good candidates, such as spectral clustering \citep{shi2000normalized}, $k$-means clustering \citep{Pakhira2014}, DBSCAN \citep{ester1996density}, OPTICS \citep{ankerst1999optics} and METIS \citep{KaKu1998}.
Any of these will work with HaarPooling.
Figure~\ref{fig:chain1} shows an example of a chain
with $3$ levels, for an input graph $\gph_0$.

\begin{figure}[ht]
\centering
\begin{minipage}{\columnwidth}
\centering	
\includegraphics[width=.9\textwidth]{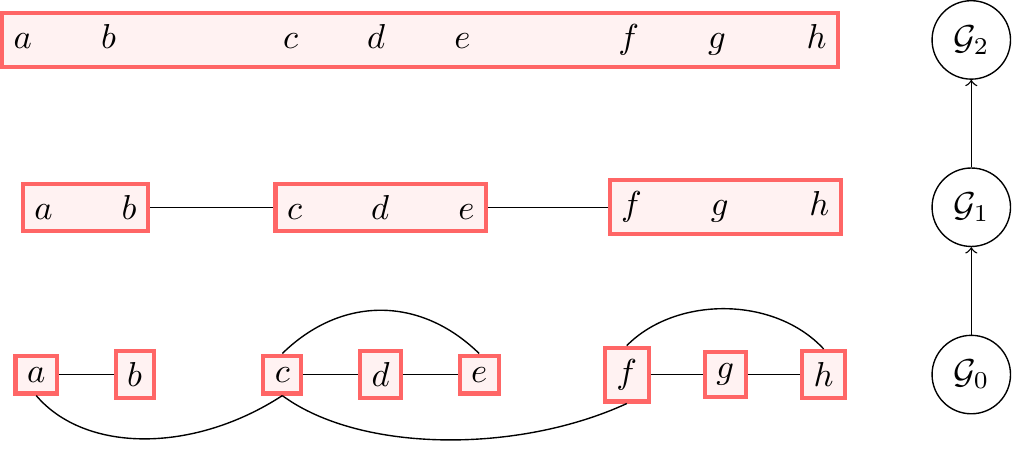}
\end{minipage}
\begin{minipage}{\columnwidth}
\centering
    \caption{A coarse-grained chain of graphs. The input has $8$ nodes; the second and top levels have $3$ and single nodes.}
    \label{fig:chain1}
\end{minipage}
\end{figure}

\paragraph{Compressive Haar transforms on chain}
For each layer of the chain, we will have a feature representation.
We define these in terms of the Haar basis.
The Haar basis represents graph-structured data by low- and high-frequency Haar coefficients in the frequency domain. The low-frequency coefficients contain the coarse information of the original data, while the high-frequency coefficients contain the fine details. In HaarPooing, the data is pooled (or compressed) by discarding the fine detail information.

The Haar basis can be compressed in each layer. Consider a chain. The two subsequent graphs have $N_{j+1}$ and $N_j$ nodes, $N_{j+1}<N_j$.
We select $N_j$ elements from the $(j+1)$th layer for the $j$th layer, each of which is a vector of size $N_{j+1}$. These $N_j$ vectors form a matrix $\Phi_{j}$ of size $N_{j+1}\times N_j$.
We call $\Phi_{j}$ the \emph{compressive Haar basis matrix} for this particular $j$th layer. This then defines the \emph{compressive Haar transform} $\Phi_{j}^{T}X^{\rm in}$ for feature $X^{\rm in}$ of size $N_{j}\times d$.

\paragraph{Computational strategy of HaarPooling}
By compressive Haar transform, we can define the HaarPooling.
\begin{definition}[HaarPooling]\label{defn:haarpool}
    The HaarPooling for a graph neural network with $K$ pooling layers is defined as
    \begin{equation*}
        X_j^{\rm out} = \Phi_j^T
        X_j^{\rm in},\quad j=0,1,\dots,K-1,
    \end{equation*}
where $\Phi_j$ is the $N_{j}\times N_{j+1}$ compressive Haar basis matrix for the $j$th layer, $X_j^{\rm in}\in \R^{N_{j}\times d_j}$ is the input feature array, and $X_j^{\rm out}\in \R^{N_{j+1}\times d_j}$ is the output feature array,
for some $N_j>N_{j+1}$, $j=0,1,\dots,K-1$, and $N_K=1$. For each $j$, the corresponding layer is called the $j$th HaarPooling layer.
More explicitly, we also write $\Phi_j$ as $\Phi^{(j)}_{N_j\times N_{j+1}}$.
\end{definition}

HaarPooling has the following fundamental properties.

\begin{itemize}[leftmargin=4mm]
\item First, the HaarPooling is a hierarchically structured algorithm. The coarse-grained chain determines the hierarchical relation in different HaarPooling layers. The node number of each HaarPooling layer is equal to the number of nodes of the subgraph of the corresponding layer of the chain. As the top-level of the chain can have one node, the HaarPooling finally reduces the number of nodes to one, thus producing a fixed dimensional output in the last HaarPooling layer.

    \item The HaarPooling uses the sparse Haar representation on chain structure. In each HaarPooling layer, the representation then combines the features of input $X_j^{\rm in}$ with the geometric information of the graphs of the $j$th and $(j+1)$th layers of the chain.

    \item By the property of the Haar basis, the HaarPooling only drops the high-frequency information of the input data. The $X_{j}^{\rm out}$ mirrors the low-frequency information in the Haar wavelet representation of $X_j^{\rm in}$. Thus, HaarPooling preserves the essential information of the graph input, and the network has small information loss in pooling.
\end{itemize}

\begin{figure*}
    \centering
    \begin{minipage}{0.47\textwidth}
    \centering
     \includegraphics[height=5cm]{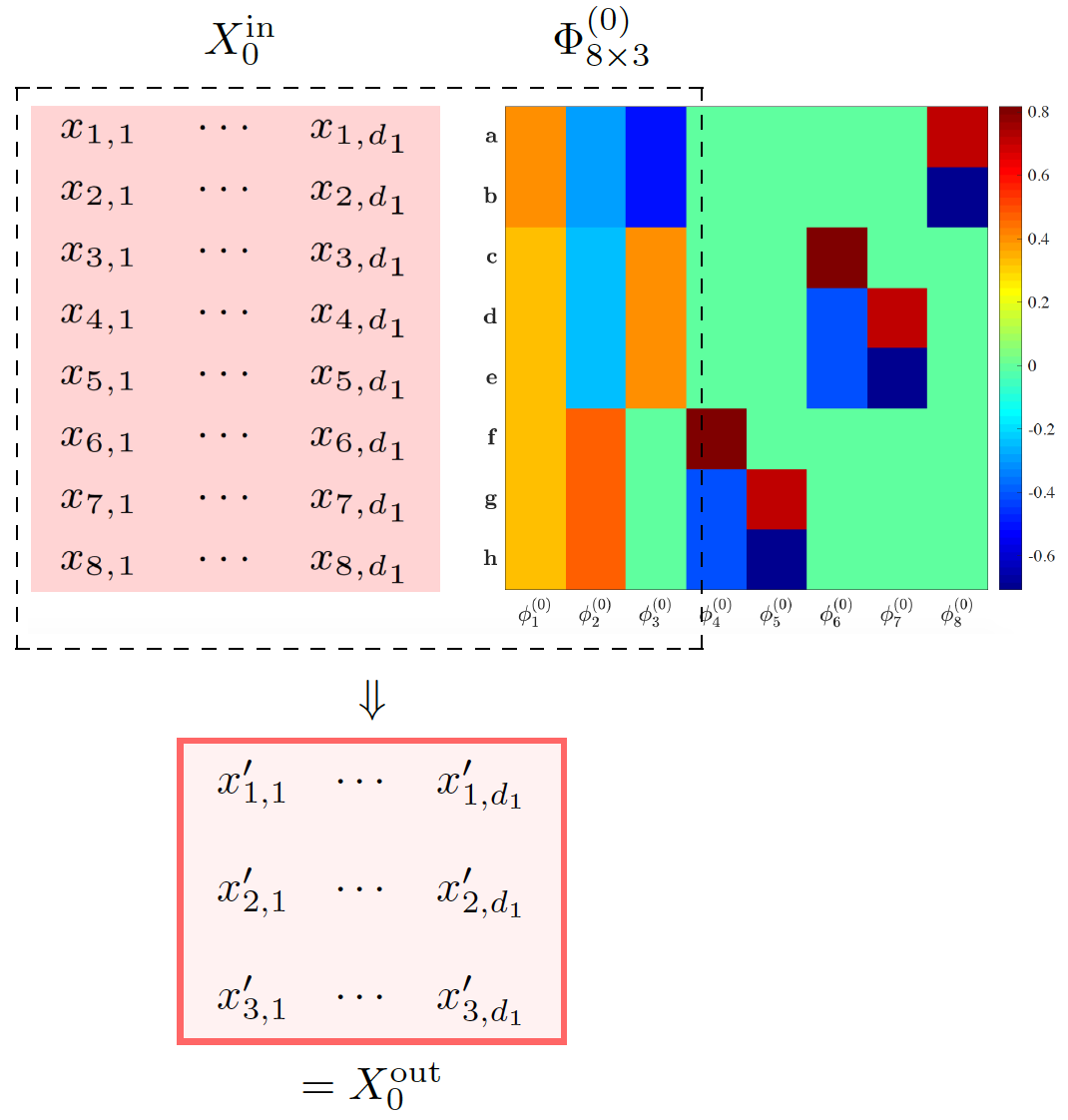}\\
     {\small (a) First HaarPooling Layer for $\gph_0\to\gph_1$.}
    \end{minipage}
    \hspace{.1cm}
    \begin{minipage}{0.47\textwidth}
    \centering
     \includegraphics[width=0.8\columnwidth,height=5cm]{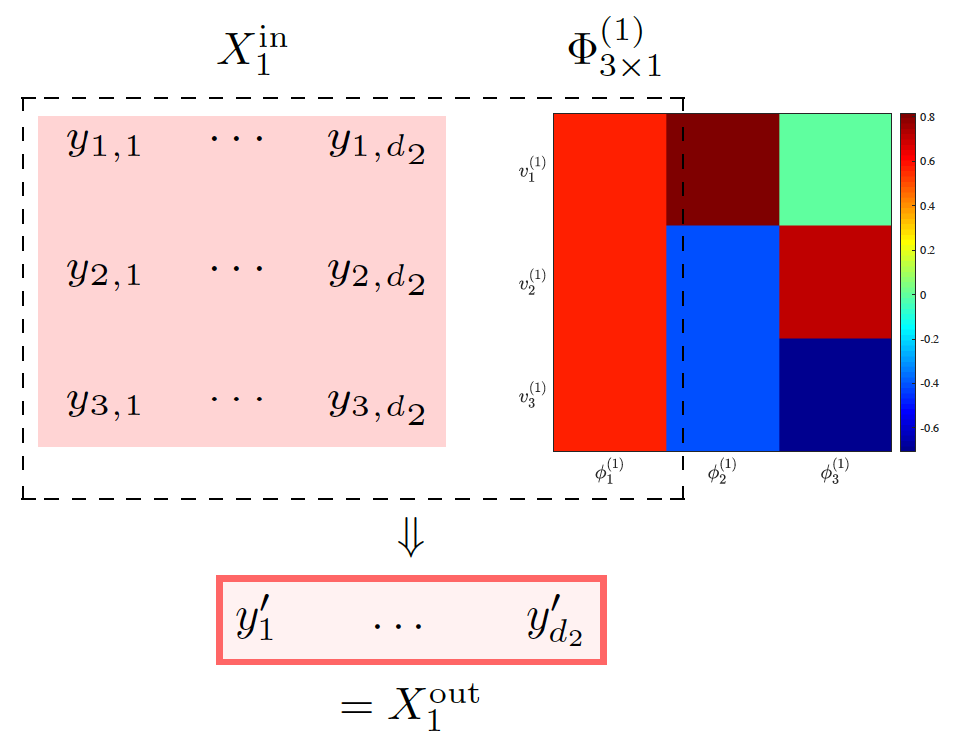}\\
     {\small (b) Second HaarPooling Layer for $\gph_1\to\gph_2$.}
    \end{minipage}
    \caption{Computational strategy of HaarPooling. We use the chain in Figure~\ref{fig:chain1}, and then the network has two HaarPooling layers: $\gph_0\to\gph_1$ and $\gph_1\to \gph_2$. The input of each layer is pooled by the compressive Haar transform for that layer: in the first layer, the input $X^{\rm in}_1=(x_{i,j})\in \R^{8\times d_1}$ is transformed by the compressive Haar basis matrix $\Phi_{8\times3}^{(0)}$ of size $8\times3$ formed by the first three column vectors of the original Haar basis, and the output is a feature array of size $3\times d_1$; in the second layer, $X^{\rm in}_2=(y_{i,j})\in\R^{3\times d_2}$ is transformed by the first column vector $\Phi_{3\times1}^{(1)}$ and the output is a feature vector of size $1\times d_2$. In the plots of the Haar basis matrix, the colors indicate the value of the entries of the Haar basis matrix.}
    \label{fig:comput.HaarPool}
\end{figure*}

\paragraph{Example}
Figure~\ref{fig:comput.HaarPool} shows the computational details of the HaarPooling associated with the chain from Figure~\ref{fig:chain1}.
There are two HaarPooling layers.
In the first layer, the input $X_1^{\rm in}$ of size $8\times d_1$ is transformed by the compressive Haar basis matrix $\Phi_{8\times3}^{(0)}$ which consists of the first three column vectors of the full Haar basis $\Phi_{8\times8}^{(0)}$ in (a), and the output is a $3\times d_1$ matrix $X_1^{\rm out}$.
In the second layer, the input $X_2^{\rm in}$ of size $3\times d_2$ (usually $X_1^{\rm out}$ followed by convolution) is transformed by the compressive Haar matrix $\Phi_{3\times 1}^{(1)}$, which is the first column vector of the full Haar basis matrix $\Phi_{3\times3}^{(1)}$ in (b).
By the construction of the Haar basis in relation to the chain (see Section~\ref{sec:math.haarpool}),
each of the first three column vectors $\eigfm[1]^{(0)}, \eigfm[2]^{(0)}$ and $\eigfm[3]^{(0)}$ of $\Phi^{(0)}_{8\times3}$ has only up to three different values.
This bound is precisely the number of nodes of $\gph_1$.
For each column of $\eigfm[\ell]^{(0)}$, all nodes with the same parent take the same value.
Similarly, the $3\times 1$ vector $\eigfm[1]^{(1)}$ is constant.
This example shows that the HaarPooling amalgamates the node feature by adding the same weight to the nodes that are in the same cluster of the coarser layer, and in this way, pools the feature using the graph clustering information.

\section{Compressive Haar Transforms}
\label{sec:math.haarpool}
\paragraph{Chain of graphs by clustering}
For a graph $\gph=(V,E,w)$, where $V,E,w$ are the vertices, edges, and weights on edges, a graph $\gph^{\rm cg}=(V^{\rm cg},E^{\rm cg},w^{\rm cg})$ is a \emph{coarse-grained graph} of $\gph$ if $|V^{\rm cg}|\leq |V|$ and each node of $\gph$ has only one parent node in $\gph^{\rm cg}$ associated with it. Each node of $\gph^{\rm cg}$ is called a \emph{cluster} of $\gph$.
For integers $J>0$, a \emph{coarse-grained chain} for $\gph$ is a sequence of graphs $\gph_{0\to J}:=(\gph_{0},\gph_1,\dots,\gph_{J})$ with $\gph_0=\gph$ and such that $\gph_{j+1}$ is a coarse-grained graph of $\gph_{j}$ for each $j=0,1,\dots,J-1$, and $\gph_{J}$ has only one node. Here, we call the graph $\gph_{J}$ the \emph{top level} or the \emph{coarsest level} and $\gph_{0}$ the \emph{bottom level} or the \emph{finest level}.
The chain $\gph_{0\to J}$ hierarchically coarsens graph $\gph$. We use the notation $J+1$ for the number of layers of the chain, to distinguish it from the number $K$ of layers for pooling.
For details about graphs and chains, we refer the reader to the examples by \cite{ChGr1997,HaVaGr2011,ChFiMh2015,ChMhZh2018,WaZh2019,wang2020tight}.

\citet{Haar1910} first introduced Haar basis on the real axis. It is a particular example of the more general Daubechies wavelets \citep{Daubechies1992}. Haar basis was later constructed on graphs by \cite{BeNiSi2006}, and also \cite{ChFiMh2015,WaZh2019,wang2020tight,LiHANet2019}.

\paragraph{Construction of Haar basis} The Haar bases $\{\phi_{\ell}^{(j)}\}_{\ell=1}^{N_j}$, $j=0,\dots,J$, is a sequence of collections of vectors. Each Haar basis is associated with a single layer of the chain $\gph_{0\to J}$ of a graph $\gph$. For $j=0,\dots,J$,
we let the matrix $\widetilde{\Phi}_j=(\phi_1^{(j)},\dots,\phi_{N_j}^{(j)})\in \R^{N_j\times N_j}$ and call the matrix $\widetilde{\Phi}_j$ \emph{Haar transform matrix} for the $j$th layer.
In the following, we detail the construction of the Haar basis based on the coarse-grained chain of a graph, as discussed in \cite{WaZh2019,wang2020tight,LiHANet2019}. We attach the algorithmic pseudo-codes for generating the Haar basis on the graph in the supplementary material.

\textbf{Step~1.} Let $\gph^{\rm cg}=(V^{\rm cg},E^{\rm cg},w^{\rm cg})$ be a coarse-grained graph of $\gph=(V,E,w)$ with $N^{\rm cg}:=|V^{\rm cg}|$. Here we use the sub-index ``cg'' to indicate the symbol is for the coarse-grained graph. Each vertex $v^{\rm cg}\in V^{\rm cg}$ is a cluster $v^{\rm cg}=\{v\in V\,|\,  v\mbox{ has parent } v^{\rm cg}\}$ of $\gph$. Order $V^{\rm cg}$, e.g., by degrees of vertices or weights of vertices, as $V^{\rm cg}=\{v^{\rm cg}_1,\ldots,v^{\rm cg}_{N^{\rm cg}}\}$. We define $N^{\rm cg}$ vectors $\eigfm^{\rm cg}$ on $\gph^{\rm cg}$ by
\begin{equation}\label{eq:haargc1}
\eigfm[1]^{\rm cg}(v^{\rm cg})  :=\frac{1}{\sqrt{N^{\rm cg}}},
\quad v^{\rm cg}\in V^{\rm cg},
\end{equation}
and for $\ell=2,\ldots,N^{\rm cg}$,
\begin{equation}\label{eq:haargc2}
\eigfm[\ell]^{\rm cg}:=\sqrt{\frac{N^{\rm cg}-\ell+1}{N^{\rm cg}-\ell+2}}\left(\chi^{\rm cg}_{\ell-1}-\frac{\sum_{j=\ell}^{N^{\rm cg}}\chi_j^{\rm cg}}{{N^{\rm cg}-\ell+1}}\right),
\end{equation}
where  $\chi_j^{\rm cg}$ is the indicator function for the $j$th vertex $v_j^{\rm cg}\in V^{\rm cg}$ on $\gph$ given by
\[
\chi_j^{\rm cg}(v^{\rm cg}) :=
\begin{cases}
1, & v^{\rm cg} = v_j^{\rm cg},\\
0, & v^{\rm cg}\in V^{\rm cg}\backslash \{v_j^{\rm cg}\}.
\end{cases}
\]
Then, the $\{\phi_{\ell}^{\rm cg}\}_{\ell=1}^{N^{\rm cg}}$ forms an orthonormal basis for $l_2(\gph^{\rm cg})$.
Each $v\in V$ belongs to exactly one cluster $v^{\rm cg}\in V^{\rm cg}$. In view of this,
for each $\ell=1,\dots,N^{\rm cg}$, we can extend the vector $\phi_\ell^{\rm cg}$ on $\gph^{\rm cg}$ to a vector $\eigfm[\ell,1]$ on $\gph$ by
\begin{equation*}
\eigfm[\ell,1](v):=
\frac{\eigfm^{\rm cg}(v^{\rm cg})}{\sqrt{|v^{\rm cg}|}}, \quad  v\in  v^{\rm cg},
\end{equation*}
here $|v^{\rm cg}|:=k_\ell$ is the size of the cluster $v^{\rm cg}$, i.e., the number of vertices in $\gph$ whose common parent is $v^{\rm cg}$.  We  order the cluster $v_\ell^{\rm cg}$,  e.g., by degrees of vertices, as
\[
v_\ell^{\rm cg} = \{v_{\ell,1},\ldots,v_{\ell,k_\ell}\}\subseteq V.
\]
For $k=2,\ldots,k_\ell$, similar to Equation~(\ref{eq:haargc2}), define
\begin{equation*}
\eigfm[\ell,k] =
 \sqrt{\frac{k_\ell-k+1}{k_\ell-k+2}}\left(\chi_{\ell,k-1}-\frac{\sum_{j=k}^{k_\ell}\chi_{\ell,j}}{k_\ell-k+1}\right),
\end{equation*}
where for $j=1,\dots,k_{\ell}$, $\chi_{\ell,j}$  is given by
\[
\chi_{\ell,j}(v) :=
\begin{cases}
1, & v = v_{\ell,j},\\
0, & v \in V\backslash \{v_{\ell,j}\}.
\end{cases}
\]
Then, the resulting $\{\phi_{\ell,k}: \ell=1,\dots,N^{\rm cg}, k=1,\dots,k_{\ell}\}$ is an orthonormal basis for $l_2(\gph)$.

\textbf{Step~2.} Let $\gph_{0\to J}$ be a coarse-grained chain for the graph $\gph$. An orthonormal basis $\{\phi_{\ell}^{(J)}\}_{\ell=1}^{N_J}$ for $l_2(\gph_{J})$ is generated using Equations~(\ref{eq:haargc1}) and (\ref{eq:haargc2}). We then repeatedly use Step~1: for $j=0,\dots,J-1$, generate an orthonormal basis $\{\phi_{\ell}^{(j)}\}_{\ell=1}^{N_j}$ for $l_2(\gph_j)$ from the orthonormal basis $\{\phi_{\ell}^{(j+1)}\}_{\ell=1}^{N_{j+1}}$ for the coarse-grained graph $\gph_{j+1}$ that was derived in the previous steps. We call the sequence $\{\phi_{\ell}:=\phi_{\ell}^{(0)}\}_{\ell=1}^{N_0}$ of vectors at the finest level,  the \emph{Haar global orthonormal basis}, or simply the \emph{Haar basis}, for $\gph$ associated with the chain $\gph_{0\to J}$. The orthonormal basis $\{\phi_{\ell}^{(j)}\}_{\ell=1}^{N_j}$ for $l_2(\gph_j)$, $j=1,\dots,J$ is called the \emph{Haar basis} for the $j$th layer.

\paragraph{Compressive Haar basis}
Suppose we have constructed the (full) Haar basis $\{\phi_{\ell}^{(j)}\}_{\ell=0}^{N_j}$ for each layer $\gph_j$ of the chain $\gph_{0\to K}$. The \emph{compressive Haar basis} for layer $j$ is $\{\phi_{\ell}^{(j)}\}_{\ell=0}^{N_{j+1}}$. We will use the transforms of this basis to define HaarPooling.

\paragraph{Orthogonality}
For each level $j=0,\dots,J$, the sequence $\{\phi_{\ell}^{(j)}\}_{\ell=1}^{N_j}$, with $N_j:=|V_j|$, is an orthonormal basis for the space $l_2(\gph_j)$ of square-summable sequences on the graph $\gph_j$, so that $(\phi_{\ell}^{(j)})^T\phi_{\ell'}^{(j)}=\delta_{\ell,\ell'}$.
For each $j$, $\{\phi_{\ell}^{(j)}\}_{\ell=1}^{N_j}$ is the Haar basis system for the chain $\gph_{j\to J}$.

\paragraph{Locality}
Let $\gph_{0 \to J}$ be a coarse-grained chain for $\gph$. If each parent of level $\gph_j$, $j=1,\dots,J$, contains at least two children, the number of different scalar values of the components of the Haar basis vector $\phi_{\ell}^{(j)}$, $\ell=1,\dots,N_j$, is bounded by a constant independent of $j$.

In Figure~\ref{fig:comput.HaarPool}, the Haar basis is generated based on the coarse-grained chain $\gph_{0\to 2}:=(\gph_0,\gph_1,\gph_2)$, where $\gph_0,\gph_1,\gph_2$ are graphs with $8, 3, 1$ nodes. The two colorful matrices show the two Haar bases for the layers $0$ and $1$ in the chain $\gph_{0\to 2}$. There are in total $8$ vectors of the Haar basis for $\gph_0$ each with length $8$, and $3$ vectors of the Haar basis for $\gph_1$ each with length $3$.
Haar basis matrix for each level of the chain has up to $3$ different values in each column, as indicated by colors in each matrix.
For $j=0,1$, each node of $\gph_{j}$ is a cluster of nodes in $\gph_{j+1}$. Each column of the matrix is a member of the Haar basis on the individual layer of the chain. The first three column vectors of $\widetilde{\Phi}_1$ can be reduced to an orthonormal basis of $\gph_1$ and the first column vector of $\gph_1$ to the constant basis for $\gph_2$. This connection ensures that the compressive Haar transforms for HaarPooling is also computationally feasible.

\paragraph{Adjoint and forward Haar transforms}
We utilize adjoint and forward Haar transforms to compute HaarPooling. Due to the sparsity of the Haar basis matrix, the transforms are computationally feasible.
The \emph{adjoint Haar transform} for the signal $f$ on $\gph_j$ is
\begin{equation}\label{eq:adft}
    (\widetilde{\Phi}_j)^T f = \left(\sum_{v\in V}\phi_1^{(j)}(v)f(v),\dots,\sum_{v\in V}\phi_{N_j}^{(j)}(v)f(v)\right)\in\R^{N_j},
\end{equation}
and the \emph{forward Haar transform} for (coefficients) vector $c:=(c_1,\dots,c_{N_j})\in \R^{N_j}$ is
\begin{equation}\label{eq:dft}
    (\widetilde{\Phi}_j c)(v) = \sum_{\ell=1}^{N_j}\phi_{\ell}^{(j)}(v) c_{\ell},\quad v\in V_j.
\end{equation}
We call the components of $(\widetilde{\Phi}_j)^T f$ the \emph{Haar (wavelet) coefficients} for $f$.
The adjoint Haar transform represents the signal in the Haar wavelet domain by computing the Haar coefficients for graph signal, and the forward transform sends back the Haar coefficients to the time domain.
Here, the adjoint and forward Haar transforms can be extended to a feature data with size $N_j\times d_j$ by replacing the column vector $f$ with the feature array.
\begin{proposition}\label{prop:signalrecover} The adjoint and forward Haar Transforms are invertible in that for $j=0,\dots,J$ and vector $f$ on graph $\gph_j$,
\begin{equation*}
    f = \widetilde{\Phi}_j (\widetilde{\Phi}_j)^T f.
\end{equation*}
\end{proposition}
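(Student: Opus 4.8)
The plan is to recognize that the claimed identity is equivalent to the matrix statement $\widetilde{\Phi}_j (\widetilde{\Phi}_j)^T = I_{N_j}$, and to derive this directly from the orthonormality of the Haar basis established above. Recall that $\widetilde{\Phi}_j = (\phi_1^{(j)},\dots,\phi_{N_j}^{(j)})$ is the square $N_j\times N_j$ matrix whose columns are the Haar basis vectors for the $j$th layer, and that the adjoint transform \eqref{eq:adft} computes $(\widetilde{\Phi}_j)^T f$ while the forward transform \eqref{eq:dft} computes $\widetilde{\Phi}_j c$. Thus the composition $\widetilde{\Phi}_j (\widetilde{\Phi}_j)^T f$ is exactly the forward transform applied to the Haar coefficients of $f$, and proving the proposition amounts to showing this round trip is the identity map on $l_2(\gph_j)$.

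First I would use the Orthogonality property: for each fixed $j$, the family $\{\phi_\ell^{(j)}\}_{\ell=1}^{N_j}$ satisfies $(\phi_\ell^{(j)})^T \phi_{\ell'}^{(j)} = \delta_{\ell,\ell'}$, which is precisely the statement that $(\widetilde{\Phi}_j)^T \widetilde{\Phi}_j = I_{N_j}$. Since $\widetilde{\Phi}_j$ is square, a left inverse is automatically a two-sided inverse, so $\widetilde{\Phi}_j (\widetilde{\Phi}_j)^T = I_{N_j}$ as well; applying both sides to $f$ gives $f = \widetilde{\Phi}_j (\widetilde{\Phi}_j)^T f$. Equivalently, and perhaps more transparently, I would expand componentwise: writing $c_\ell = \sum_{v} \phi_\ell^{(j)}(v) f(v)$ for the Haar coefficients from \eqref{eq:adft} and substituting into \eqref{eq:dft}, the round trip evaluates at a node $u$ to $\sum_{\ell=1}^{N_j} \phi_\ell^{(j)}(u)\bigl(\sum_v \phi_\ell^{(j)}(v) f(v)\bigr)$, which is the expansion of $f$ in the orthonormal basis and hence equals $f(u)$ by the resolution of identity.

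The hard part, such as it is, is not the algebra but making sure the hypotheses are genuinely available: the identity $\widetilde{\Phi}_j (\widetilde{\Phi}_j)^T = I$ requires that the columns of $\widetilde{\Phi}_j$ form not merely an orthonormal set but a complete orthonormal basis of $l_2(\gph_j)$. This is guaranteed because the construction produces exactly $N_j = \dim l_2(\gph_j)$ mutually orthonormal vectors, so they automatically span the space; this completeness is exactly what the Orthogonality paragraph asserts and what the inductive Step~2 of the construction maintains from layer to layer. I would therefore simply invoke that the per-layer family is an orthonormal basis and close the argument as above, treating the proposition as an immediate corollary of orthonormality together with the squareness of $\widetilde{\Phi}_j$.
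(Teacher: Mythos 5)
Your proof is correct and takes essentially the same route the paper does: the paper states this proposition without a written proof (its appendix proves only Equation~(6) and Theorem~2), treating it exactly as you do --- an immediate corollary of the orthonormality $(\phi_\ell^{(j)})^T\phi_{\ell'}^{(j)}=\delta_{\ell,\ell'}$, which makes the square matrix $\widetilde{\Phi}_j$ orthogonal, so that $(\widetilde{\Phi}_j)^T\widetilde{\Phi}_j = I_{N_j}$ implies $\widetilde{\Phi}_j(\widetilde{\Phi}_j)^T = I_{N_j}$. Your care about completeness is the right point to flag, and it holds by dimension count: $N_j$ mutually orthonormal vectors in the $N_j$-dimensional space $l_2(\gph_j)$ necessarily form a basis.
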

Proposition~\ref{prop:signalrecover} shows that the forward Haar transform can recover the graph signal $f$ from the adjoint Haar transform $(\widetilde{\Phi}_j)^T f$, which means that adjoint and forward Haar transforms have zero-loss in graph signal transmission.

\paragraph{Compressive Haar transforms}
Now for a graph neural network, suppose we want to use $K$ pooling layers for $K\geq1$. We associate the chain $\gph_{0\to K}$ of an input graph with the pooling by linking the $j$th layer of pooling with the $j$th layer of the chain. Then, we can use the Haar basis system on the chain to define the pooling operation.
By the property of Haar basis, in the Haar transforms for layer $j$, $0\leq j\leq K-1$, of the $N_{j}$ Haar coefficients, the first $N_{j+1}$ coefficients are the low-frequency coefficients, which reflect the approximation to the original data, and the remaining $(N_{j}-N_{j+1})$ coefficients are in high frequency, which contains fine details of the Haar wavelet decomposition.
To define pooling, we remove the high-frequency coefficients in the Haar wavelet representation and then obtain the \emph{compressive Haar transforms} for the feature $X_j^{\rm in}$ at layers $j=0,\dots,K-1$, which then gives the HaarPooling in Definition~\ref{defn:haarpool}.

As shown in the following formula, the compressive Haar transform incorporates the neighborhood information of the graph signal as compared to the full Haar transform. Thus, the HaarPooling can take the average information of the data $f$ over nodes in the same cluster.
\begin{equation}\label{eq:norm.pooling}
\begin{aligned}
    \left\|\Phi_{j}^T X_j^{\rm in}\right\|^2
    &=\sum_{p\in\gph_{j+1}}\frac{1}{|\parents(v)|}
    \Bigl|\sum_{p=\parents(v)}X_j^{\rm in}(v)\Bigr|^2\\
   \left\|\widetilde{\Phi}_{j}^T X_j^{\rm in}\right\|^2
   &=\sum_{p\in\gph_{j+1}}
    \sum_{p=\parents(v)}\Bigl|X_j^{\rm in}(v)\Bigr|^2,
\end{aligned}
\end{equation}
where $\widetilde{\Phi}_{j}$ is the full Haar basis matrix at the $j$th layer and $|\parents_{\gph}(v)|$ is the number of nodes in the cluster which the node $v$ lies in. Here, $1/\sqrt{|\parents_{\gph}(v)|}$ can be taken out of summation as $\parents(v)$ is in fact a set of nodes. We show the derivation of formula in Equation~(\ref{eq:norm.pooling}) in the supplementary.

In HaarPooling, the compression or pooling occurs in the Haar wavelet domain. It transforms the features on the nodes to the Haar wavelet domain. It then discards the high-frequency coefficients in the sparse Haar wavelet representation.
See Figure~\ref{fig:comput.HaarPool} for a two-layer HaarPooling example.

\section{Experiments}
\label{sec:exp}
In this section, we present the test results of HaarPooling on various datasets in graph classification and regression tasks. We show a performance comparison of the HaarPooling with existing graph pooling methods.
All the experiments use PyTorch Geometric \citep{fey2019fast} and were run in Google Cloud using 4 Nvidia Telsa T4 with 2560 CUDA cores, compute 7.5, 16GB GDDR6 VRAM.

\subsection{HaarPooling on Classification Benchmarks}
\paragraph{Datasets and baseline methods}
To verify whether the proposed framework can hierarchically learn good graph representations for classification, we evaluate \emph{HaarPooling} on five widely used benchmark datasets for graph classification~\citep{KKMMN2016}, including one
protein graph dataset
{\bf PROTEINS}~\citep{borgwardt2005protein,dobson2003distinguishing};
two mutagen datasets \textbf{MUTAG}~\citep{Debnath_etal1991,kriege2012subgraph} and {\bf MUTAGEN}~\citep{riesen2008iam,kazius2005derivation} (full name Mutagenicity); and two datasets that consist of chemical compounds screened for activity against non-small cell lung cancer and ovarian cancer cell lines, {\bf NCI1} and {\bf NCI109}~\citep{wale2008comparison}.
We include datasets from different domains, samples, and graph sizes to give a comprehensive understanding of how the HaarPooling performs with datasets in various scenarios. Table~\ref{tab:statistics} summarizes some statistical information of the datasets: each dataset containing graphs with different sizes and structures, the number of data samples ranges from 188 to 4,337, the average number of nodes is from 17.93 to 39.06, and the average number of edges is from 19.79 to 72.82.
\begin{table*}[t]
\caption{Summary statistics of the graph classification datasets.}\label{tab:statistics}
\begin{center}
\begin{small}
\begin{tabularx}{380pt}{l *7{>{\Centering}X}}
\toprule
{\bf Dataset}  & MUTAG & PROTEINS& NCI1 & NCI109 & MUTAGEN \\
\midrule
max \#nodes  &28 &620  &111 &  111&417\\
min \#nodes  &10 &4  & 3& 4&4 \\
avg \#nodes &17.93 & 39.06  & 29.87 &29.68&30.32\\
avg \#edges &19.79 &72.82 &32.30 &32.13 &30.77\\
\#graphs    &188 & 1,113  & 4,110&4,127& 4,337\\
\#classes & 2& 2 &2 &2& 2\\
\bottomrule
\end{tabularx}
\end{small}
\end{center}
\end{table*}
We compare \textbf{HaarPool} with \textbf{SortPool} \citep{zhang2018end}, \textbf{DiffPool} \citep{ying2018hierarchical}, \textbf{gPool} \citep{gao2019graph}, \textbf{SAGPool} \citep{lee2019self}, \textbf{EigenPool} \citep{ma2019graph}, \textbf{CSM} \citep{kriege2012subgraph} and \textbf{GIN} \citep{GIN} on the above datasets.

\paragraph{Training}
In the experiment, we use a GNN with at most $3$ GCN \citep{KiWe2017} convolutional layers plus one HaarPooling layer, followed by three fully connected layers. The hyperparameters of the network are adjusted case by case. We use spectral clustering to generate a chain with the number of layers given. Spectral clustering, which exploits the eigenvalues of the graph Laplacian, has proved excellent performance in coarsening a variety of data patterns and can handle isolated nodes.

We apply random shuffling for the dataset. We split the whole dataset into the training, validation, and test sets with percentages 80\%, 10\%, and 10\%, respectively. We use the Adam optimizer \citep{kingma2014adam}, early stopping criterion, and patience, and give the specific values in the supplementary.
Here, the early stopping criterion was that the validation loss does not improve for 50 epochs, with a maximum of 150 epochs, as suggested by \citet{shchur2018pitfalls}.

The architecture of GNN is identified by the layer type and the number of hidden nodes at each layer. For example, we denote 3GC256-HP-2FC256-FC128 to represent a GNN architecture with 3 GCNConv layers, each with 256 hidden nodes, plus one HaarPooling layer followed by 2 fully connected layers, each with 256 hidden nodes, and by one fully connected layer with 128 hidden nodes. Table~\ref{tab:architecture} shows the GNN architecture for each dataset.

\begin{table*}[t]
\centering
\begin{minipage}{0.7\textwidth}
\centering
	\caption{Performance comparison for graph classification tasks
(test accuracy in percent, showing the standard deviation over ten repetitions of the experiment).}\label{tab1}
\end{minipage}
\begin{center}
\begin{small}
\begin{threeparttable}
\begin{tabularx}{380pt}{l *6{>{\Centering}X}}
\toprule
\newcommand{\nz}{\phantom{*}}
{\bf Method} &  MUTAG & PROTEINS & NCI1 & NCI109 & MUTAGEN  \\
\midrule
CSM         & 85.4 & --   & --    &-- & -- \\
GIN         & 89.4 & 76.2  & \textbf{82.7}   &-- & -- \\
SortPool    & 85.8 & 75.5  & 74.4 & 72.3* & 78.8* \\
DiffPool   &-- &76.3   & 76.0*   &74.1* & 80.6* \\
gPool      & -- & 77.7 &--   &-- & -- \\
SAGPool   & -- & 72.1  & 74.2              & 74.1 & -- \\
EigenPool & -- & 76.6  & 77.0   & 74.9 & 79.5 \\
\midrule
HaarPool (ours) & \textbf{90.0$\pm$3.6} & \textbf{80.4$\pm$1.8} & 78.6$\pm$0.5   & \textbf{75.6$\pm$1.2} & \textbf{80.9$\pm$1.5}\\
\bottomrule
\end{tabularx}
\centering
\begin{tablenotes}
        \item[] {\rm `*' indicates records retrieved from EigenPool \citep{ma2019graph}, `--' means that there are no public records for the method on the dataset, and bold font is used to highlight the best performance in the list.}
\end{tablenotes}
\end{threeparttable}
\end{small}
\end{center}
\vskip -0.1in
\end{table*}
\paragraph{Results}
Table~\ref{tab1} reports the classification test accuracy. GNNs with HaarPooling have excellent performance on all datasets. In 4 out of 5 datasets, it achieves top accuracy. It shows that HaarPooling, with an appropriate graph convolution, can achieve top performance on a variety of graph classification tasks, and in some cases, improve state of the art by a few percentage points.
\vspace{-4mm}
\begin{table}[htbp!]
\caption{Network architecture.}\label{tab:architecture}
\vskip 0.05in
\begin{center}
\begin{small}
\begin{tabularx}{220pt}{l *2{>{\Centering}X}}
\toprule
{\bf Dataset}  & \textbf{Layers and \#Hidden Nodes}\\
\midrule
MUTAG &GC60-HP-FC60-FC180-FC60\\
PROTEINS  &2GC128-HP-2GC128-HP-2GC128-\\
& HP-GC128-2FC128-FC64 \\
NCI1  &2GC256-HP-FC256-FC1024-FC2048 \\
NCI109 &3GC256-HP-2FC256-FC128\\
MUTAGEN   &3GC256-HP-2FC256-FC128\\
\bottomrule
\end{tabularx}
\end{small}
\end{center}
\vskip -0.1in
\end{table}

{\vspace{3mm}
\subsection{HaarPooling on Triangles Classification}}
We test GNN with HaarPooling on the graph dataset \textbf{Triangles} \citep{knyazev2019understanding}.
\textbf{Triangles} is a 10 class classification problem with 45,000 graphs. The average numbers of nodes and edges of the graphs are 20.85 and 32.74, respectively. In the experiment, the network utilizes GIN convolution \citep{GIN} as graph convolution and either HaarPooling or SAGPooling \citep{lee2019self}. For SAGPooling, the network applies two combined layers of GIN convolution and SAGPooling, which is followed by the combined layers of GIN convolution and \emph{global max pooling}. We write its architecture as GIN-SP-GIN-SP-GIN-MP, where SP means the SAGPooling and MP is the global max pooling. For HaarPooling, we examine two architectures: GIN-HP-GIN-HP-GIN-MP and GIN-HP-GIN-GIN-MP, where HP stands for HaarPooling. We split the data into training, validation, and test sets of size 35,000, 5,000, and 10,000. The number of nodes in the convolutional layers are all set to 64; the batch size is 60; the learning rate is 0.001.

Table~\ref{tab:triangles} shows the training, validation, and test accuracy of the three networks. It shows that both networks with HaarPooling outperform that with SAGPooling.
\begin{table}[htbp!]
	\caption{Results on the Triangles dataset.}\label{tab:triangles}
\begin{center}
\begin{small}
\begin{tabularx}{220pt}{l *3{>{\Centering}X}}\toprule
\multirow{2}{*}{\bf Architecture} & \multicolumn{3}{c}{Accuracy (\%)}\\ \cline{2-4}
&Train & Val & Test\\
\midrule
GIN-SP-GIN-SP-GIN-MP & 45.6 & 45.3 &44.0\\
GIN-HP-GIN-HP-GIN-MP (ours) & 47.5 & 46.3 & 46.1\\
GIN-HP-GIN-GIN-MP (ours)  & 47.3 & 45.8 & 45.5\\
\bottomrule
\end{tabularx}
\end{small}
\end{center}
\vskip -0.2in
\end{table}

{\vspace{2mm}
\subsection{HaarPooling for Quantum Chemistry Regression}}
\paragraph{QM7} In this part, we test the performance of the GNN model equipped with the HaarPooling layer on the QM7 dataset.  People have recently used the QM7 to measure the efficacy of machine-learning methods for quantum chemistry \cite{BlRe2009,RuTkMuLi2012}. The QM7 dataset contains 7,165 molecules, each of which is represented by the Coulomb (energy) matrix and labeled with the atomization energy. Each molecule contains up to 23 atoms. We treat each molecule as a weighted graph: atoms as nodes and the Coulomb matrix of the molecule as the adjacency matrix. Since the node (atom) itself does not have feature information, we set the node feature to a constant vector (i.e., the vector with components all 1), so that features here are uninformative, and only the molecule structure is concerned in learning. The task is to predict the atomization energy value of each molecule graph, which boils down to a standard graph regression problem.

\paragraph{Methods in comparison} We use the same GNN architecture to test HaarPool and SAGPool \cite{lee2019self}: one GCN layer, one graph pooling layer, plus one 3-layer MLP. We compare the performance (test MAE) of the GCN-HaarPool against the GCN-SAGPool and other methods including Random Forest (RF) \citep{Breiman2001}, Multitask Networks (Multitask) \citep{Ramsundar_etal2015}, Kernel Ridge Regression (KRR) \citep{CoVa1995}, Graph Convolutional models (GC) \citep{AlRaPaPa2017}.
\vspace{-3mm}
\begin{table}[ht]
\caption{Test mean absolute error (MAE) comparison on QM7, with the standard deviation over ten repetitions of the experiments.}\label{tab:qm7_results}
\begin{center}
\begin{small}
\begin{tabular}{c|c}
\toprule
Method & Test MAE  \\
\midrule
  RF & $122.7\pm4.2$ \\
  Multitask & $123.7\pm15.6$ \\
  KRR  & $110.3\pm4.7$ \\
  GC  & $77.9\pm2.1$ \\
\midrule
GCN-SAGPool  &43.3 $\pm1.6$ \\
\midrule
GCN-HaarPool (ours)  &\textbf{42.9 $\pm$ 1.2}\\
\bottomrule
\end{tabular}
\end{small}
\end{center}
\vskip -0.1in
\end{table}

\paragraph{Experimental setting} In the experiment, we normalize the label value by subtracting the mean and scaling the standard deviation (Std Dev) to 1. We then need to convert the predicted output to the original label domain (by re-scaling and adding the mean back). Following \citet{Gilmer_etal2017}, we use mean squared error (MSE) as the loss for training and mean absolute error (MAE) as the evaluation metric for validation and test.
Similar to the graph classification tasks studied above, we use PyTorch Geometric \citep{fey2019fast} to implement the models of GCN-HaarPool and GCN-SAGPool, and run the experiment under the GPU computing environment in the Google Cloud AI Platform. Here, the splitting percentages for training, validation, and test are 80\%, 10\%, and 10\%, respectively. We set the hidden dimension of the GCN layer as 64, the Adam for optimization with the learning rate 5.0e-4, and the maximal epoch 50 with no early stop.  We do not use dropout as it would slightly lower the performance. For better comparison, we repeat all experiments ten times with different random seeds.

Table~\ref{tab:qm7_results} shows the results for GCN-HaarPool and GCN-SAGPool, together with the public results of the other methods from \citet{Wu_etal2018}. Compared to the GCN-SAGPool, the GCN-HaarPool has a lower average test MAE and a smaller Std Dev and ranks the top in the table.
Given the simple architecture of our GCN-HaarPool model, we can interpret the GCN-HaarPool an effective method, although its prediction result does not rank the top in Table~9 reported in \citet{Wu_etal2018}.
To further demonstrate that HaarPooling can benefit graph representation learning, we present in Figure~\ref{fig:visualization_qm7} the mean and Std Dev of the training MSE loss (for normalized input) and the validation MAE (which is in the original label domain) versus the epoch. It illustrates that the learning and generalization capabilities of the GCN-HaarPool are better than those of the GCN-SAGPool; in this aspect, HaarPooling provides a more efficient graph pooling for GNN in this graph regression task.

\begin{figure}[t]
\begin{center}
\hspace{-2mm}\includegraphics[width=0.465\textwidth,height=4.5cm]{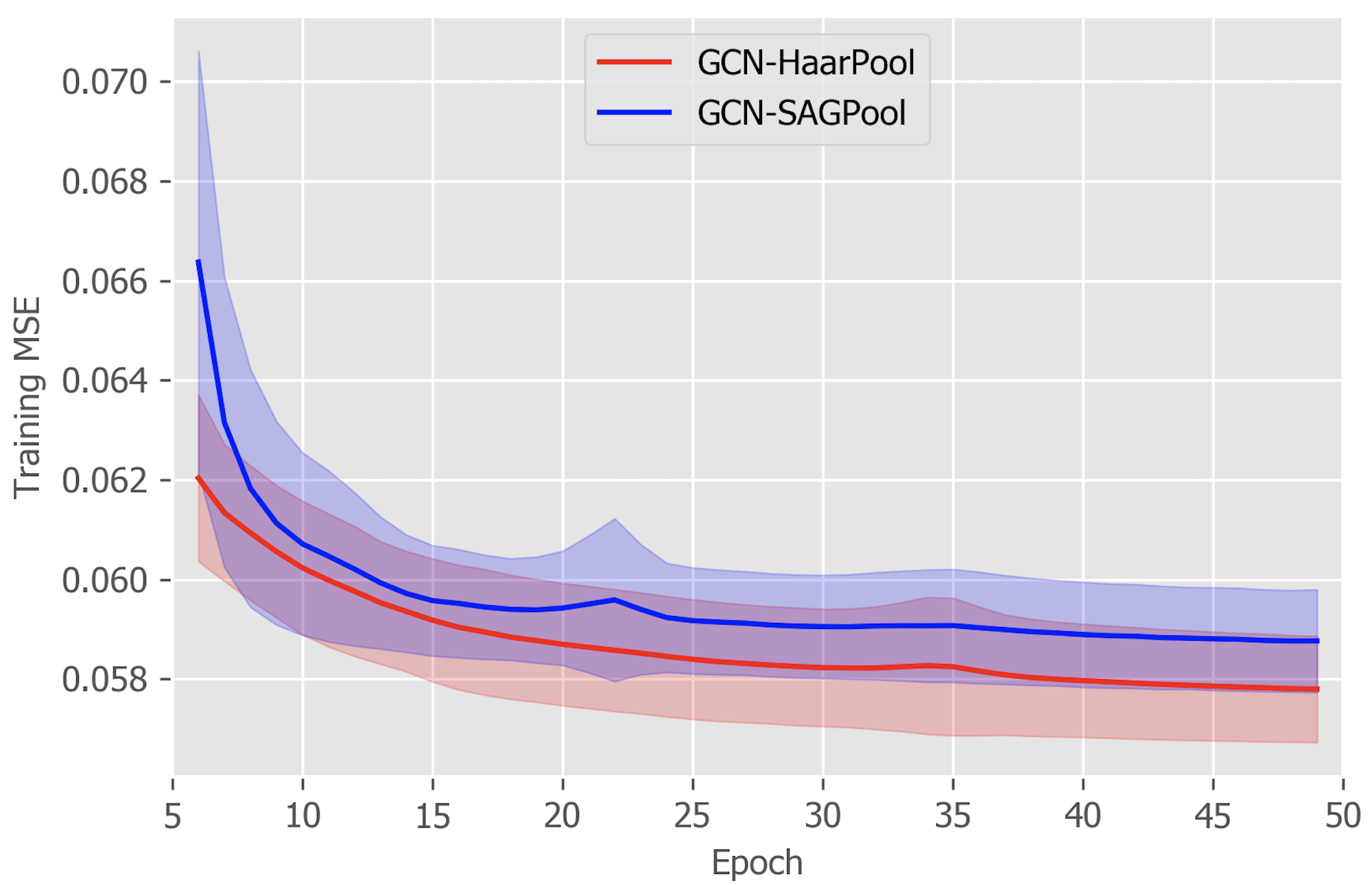}\vskip 1mm
\includegraphics[width=0.45\textwidth,height=4.5cm]{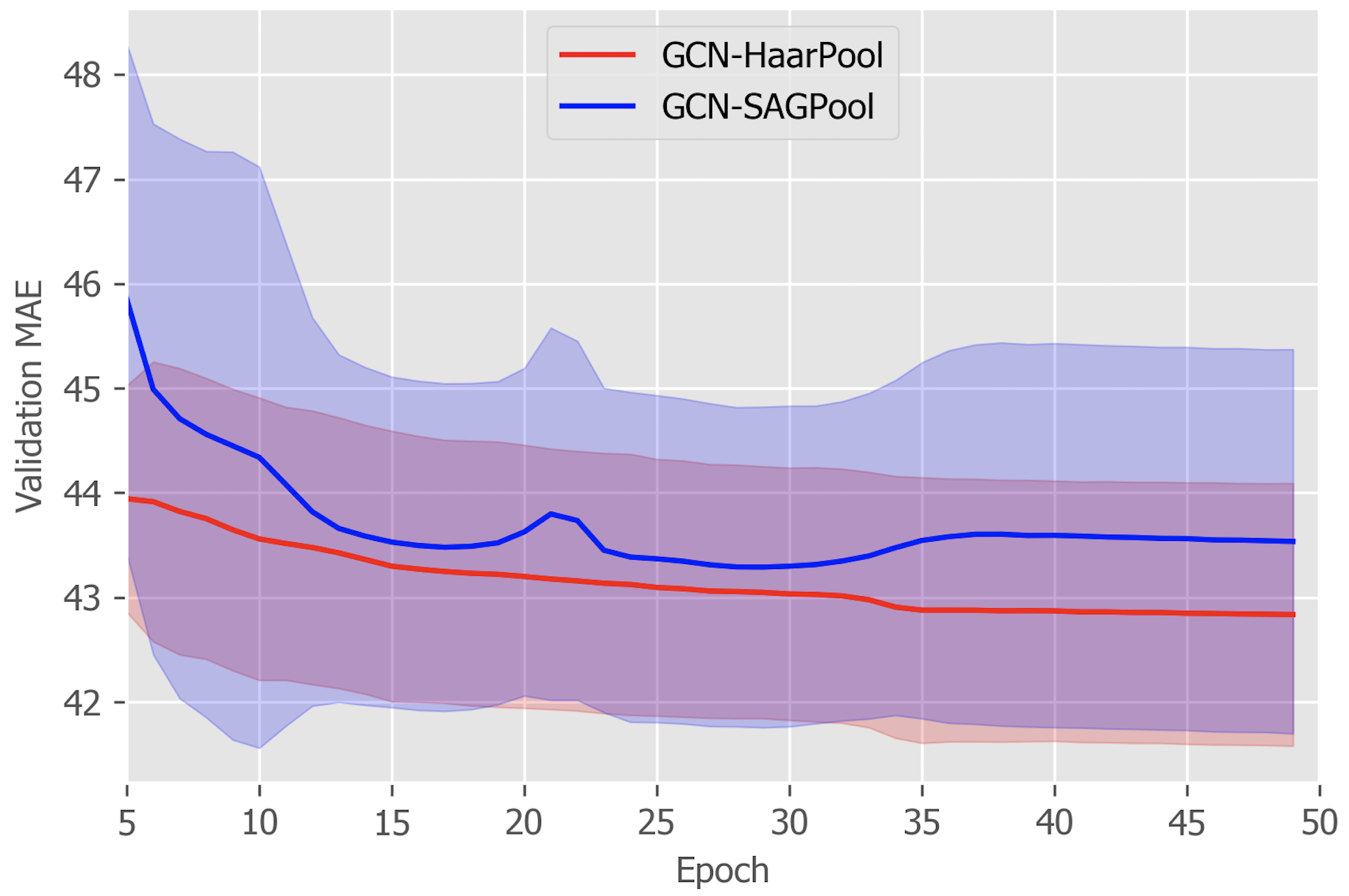}
\vskip -1mm
\caption{Visualization of the training MSE loss (top) and validation MAE (bottom) for GCN-HaarPool and GCN-SAGPool.}\label{fig:visualization_qm7}
\end{center}
\vskip -0.202in
\end{figure}

\section{Computational Complexity}
In the supplementary material, we show the time complexity comparison of HaarPooling and other existing pooling methods. HaarPool is the only algorithm in this table which has near-linear time complexity to the node number. HaarPooling can be even faster in practice, as the cost of the compressive Haar transform is dependent on the sparsity of the Haar basis matrix. The sparsity of the compressive Haar basis matrix is mainly reliant on the chain/tree for the graph. From our construction, the compressive Haar basis matrix is always highly sparse. Thus, the computational cost does not skyrocket as the size of the graph increases.

For empirical comparison, we computed the GPU time for HaarPool and TopKPool on a sequence of datasets of random graphs, as shown in Figure~\ref{fig:gpu_time_Haar_TopK}. For each run, we fix the number of edges of the graphs.
For different runs, the number of the edges ranges from 4,000 to 121,000. The sparsity of the adjacency matrix of the random graph is set to 10\%. The following table shows the average GPU time (in seconds) for pooling a minibatch of 50 graphs. For both pooling methods, we use the same network architecture and one pooling layer, and same network hyperparameters, and run under the same GPU computing environment. 

Figure~\ref{fig:gpu_time_Haar_TopK} shows that the cost of HaarPool does not change much as the edge number increases, while the cost of TopKPool increases rapidly. When the edge number is at most 25000, TopKPool runs slightly faster than HaarPool, but when the number exceeds 25000, the GPU time of TopKPool is longer.

\begin{figure}
    \centering
    \includegraphics[width=\columnwidth]{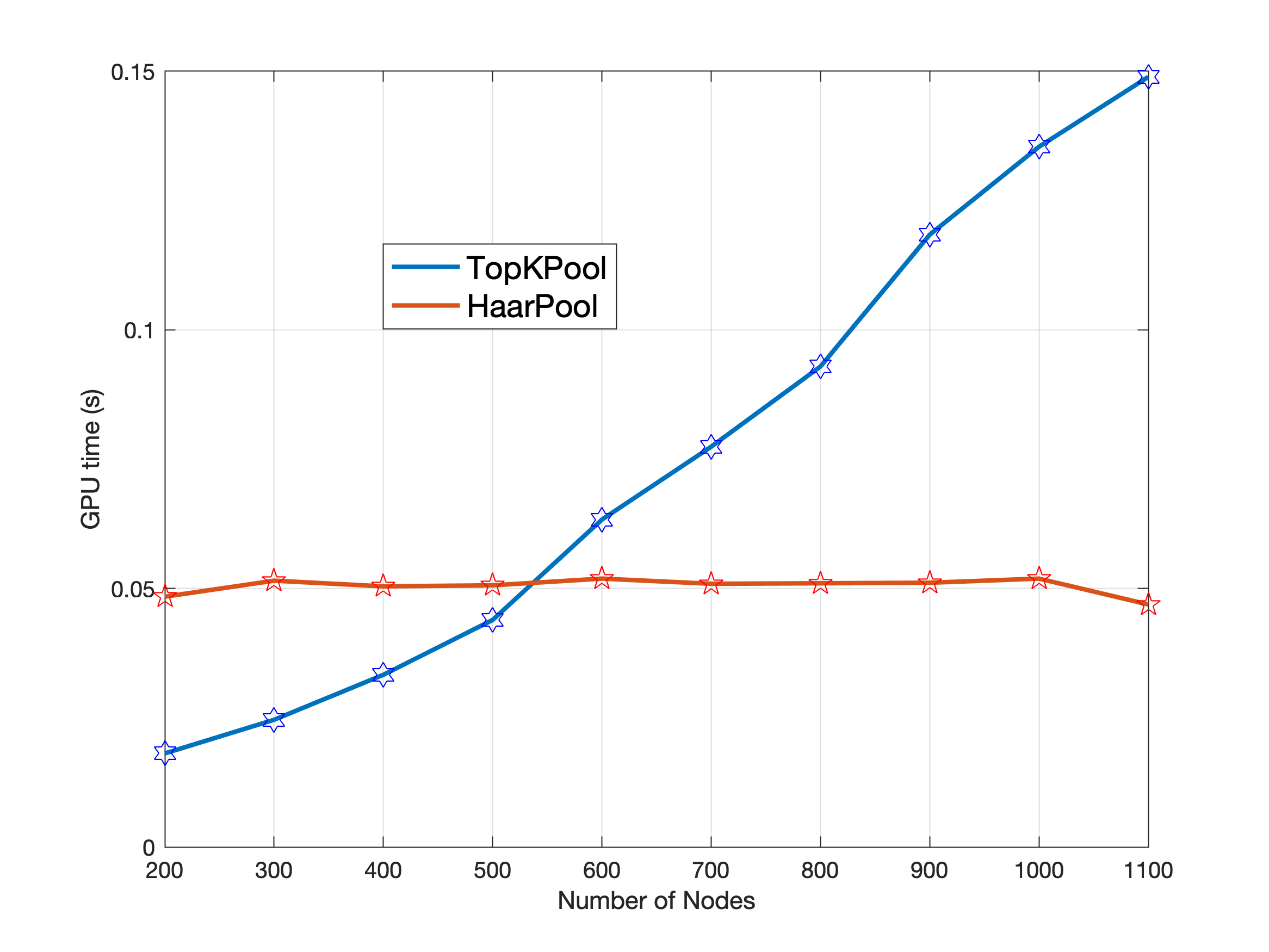}\vspace{-7mm}
    \caption{GPU time comparison of HaarPool with TopKPool for random graphs with up to 1,100 nodes; the Y-axis is the mean GPU time(in seconds) for pooling a minibatch of 50 graphs.}
    \label{fig:gpu_time_Haar_TopK}
\end{figure}

The computational cost of clustered tree generation depends on the clustering algorithms one uses. In the paper, we take spectral clustering as a typical example. Spectral clustering has good performance on various datasets, and its time complexity, though not linear, is smaller than the $k$-means clustering. The METIS is also a good candidate that has a fast implementation. As the Haar basis generation can be pre-computed, the time complexity of clustering has no impact on the complexity of pooling.

By our test, the number of layers of the chain for HaarPool has a substantial impact on the performance of GNNs, and the randomness in the clustering algorithm has little effect on the stability of GNNs.

\section{Conclusion}
\label{sec:conclusion}
We introduced a new graph pooling method called HaarPooling. It has a mathematical formalism derived from compressive Haar transforms. Unlike existing graph pooling methods, HaarPooling takes into account both the graph structure and the features over the nodes, to compute a coarsened representation. The implementation of HaarPooling is simple as the Haar basis and its transforms can be computed directly by the explicit formula. The time and space complexities of HaarPooling are cheap, $\mathcal{O}(|V|)$ and $\mathcal{O}(|V|^2\epsilon)$ for sparsity $\epsilon$ of Haar basis, respectively. As an individual unit, HaarPooling can be applied in conjunction with any graph convolution in GNNs.
We show in experiments that HaarPooling reaches and in several cases surpasses state of the art performance in multiple
graph classification and regression tasks.

\subsubsection*{Acknowledgments}
Ming Li acknowledges support from the National Natural Science Foundation of China (No.\ 61802132 and 61877020). Yu Guang Wang acknowledges support from the Australian Research Council under Discovery Project DP180100506. 
Guido Mont\'{u}far has received funding from the European Research Council (ERC) under the European Union's Horizon 2020 research and innovation programme (grant agreement n\textsuperscript{o} 757983).
Xiaosheng Zhuang acknowledges support in part from Research Grants Council of Hong Kong (Project No. CityU 11301419). 
This material is based upon work supported by the National Science Foundation under Grant No.~DMS-1439786 while Zheng Ma, Guido Mont\'{u}far and Yu Guang Wang were in residence at the Institute for Computational and Experimental Research in Mathematics in Providence, RI, during Collaborate@ICERM on ``Geometry of Data and Networks''. 
Part of this research was performed while Guido Mont\'{u}far and Yu Guang Wang were at the Institute for Pure and Applied Mathematics (IPAM), which is supported by the National Science Foundation (Grant No.~DMS-1440415). 

\bibliographystyle{icml2020}
\bibliography{haarpool_icml}

\newpage
\appendix
\section{Efficient Computation for HaarPooling}
For the HaarPooling introduced in Definition~1, we can develop a fast computational strategy by virtue of fast adjoint Haar transforms.
Let $\gph_{0\to K}$ be a coarse-grained chain of the graph $\gph_0$. For convenience, we label the vertices of the level-$j$ graph $\gph_j$ by $V_j:=\bigl\{\vj_1,\ldots,\vj_{N_j}\bigr\}$.

\paragraph{An efficient algorithm for HaarPooling}
The HaarPooling can be computed efficiently by using the hierarchical structure of the chain, as we introduce as follows. For $j=1,\dots,K$, let $c_{k}^{(j)}$ be the number of children of $\vj_k$, i.e. the number of vertices of $\gph_{j-1}$ which belongs to the cluster $\vj_k$, for $k=1,\ldots,N_j$. For $j=0$, we let $c_{k}^{(0)}\equiv1$ for $k=1,\dots,N_0$. Now, for $j=0,\dots,K$ and $k=1,\dots,N_j$, define the weight for the node $\vj_k$ of layer $j$ by
\begin{equation}\label{eq:wjk}
	\wj_k:=\frac{1}{\sqrt{c_{k}^{(j)}}}.
\end{equation}
Let $W_{0\to K}:=\{\wj_k\, | \, j=0,\dots,K,\: k=1,\dots,N_j\}$.
Then, for $j=0,\dots,K$, the weighted chain $(\gph_{j\to K},W_{j\to K})$ becomes a \emph{filtration} if each parent of the chain $\gph_{j\to K}$ has at least two children.

Let $j=0,\dots,K$. For the $j$th HaarPooling layer,
let $\{\phi_{\ell}^{(j)}\}_{\ell=1}^{N_j}$ be the Haar basis for the $j$th layer, which we also call the Haar basis for the filtration $(\gph_{j\to K},W_{j\to K})$ of a graph $\gph$.
For $k=1,\dots,N_j$, we let $X(v_k^{(j)})=X(v_k^{(j)},\cdot)\in \R^{d_j}$ the feature vector at node $v_k^{(j)}$.
We define the weighted sum for feature $X\in \R^{N_j\times d_j}$ for $d_j\geq1$ by
\begin{equation}\label{eq:ws1}
	\ws[j]\bigl(X,\vj[j]_k\bigr):= X(\vj[j]_k), \quad \vj[j]_k\in \gph_j,
\end{equation}
and recursively, for $i=j+1,\dots,K$ and $\vj[i]_k\in \gph_{i}$,
\begin{equation}\label{eq:wsj}
	\ws[i]\bigl(X,\vj[i]_k\bigr):= \sum_{\vj[i-1]_{k'}\in \vj[i]_k}\wj[i-1]_{k'} \ws[i-1]\bigl(X,\vj[i-1]_{k'}\bigr).
\end{equation}
For each vertex $\vj[i]_k$ of $\gph_i$, the $\ws[i]\bigl(X,\vj[i]_k\bigr)$ is the weighted sum of the $\ws[i-1]\bigl(X,\vj[i-1]_{k'}\bigr)$ at the level $i-1$ for those vertices $\vj[i-1]_{k'}$ of $\gph_{i-1}$ whose parent is $\vj[i]_k$.

\vspace{4mm}
\begin{theorem}\label{thm:fasthaarpool}
For $0\leq j\leq K-1$, let $\{\eigfm^{(i)}\}_{\ell=1}^{N_i}$ for $i=j+1,\dots,K$ be the Haar bases for the filtration $(\gph_{j\to K},W_{j\to K})$ at layer $i$. Then, the compressive Haar transform for the $j$th HaarPooling layer can be computed by,
for the feature $X\in \R^{N_j\times d_j}$ and $\ell=1,\dots,N_j$,
\begin{equation}\label{eq:adftbyws}
	\left(\Phi_j^T X\right)_{\ell} = \sum_{k=1}^{N_i} \ws[i]\bigl(X,\vj[i]_k\bigr)\wj[i]_k \eigfm^{(i)}(\vj[i]_k),
\end{equation}
where $i$ is the largest possible number in $\{j+1,\ldots,K\}$ such that  $\eigfm^{(i)}$ is the $\ell$th member of the orthonormal basis $\{\eigfm^{(i)}\}_{\ell=1}^{N_i}$ for $l_2(\gph_i)$, $\vj[i]_k$ are the vertices of $\gph_i$ and the weights  $\wj[i]_k$ are given by Equation~(\ref{eq:wjk}).
\end{theorem}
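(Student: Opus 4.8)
The plan is to prove the identity by expanding both sides into explicit sums over the finest level $V_j$ of the filtration and matching them term by term. First I would write the left-hand side as the inner product
\[
\left(\Phi_j^T X\right)_\ell = \sum_{v\in V_j}\eigfm^{(j)}(v)\,X(v),
\]
using that the $\ell$th column of $\Phi_j$ is the Haar basis vector $\eigfm^{(j)}$ on $\gph_j$. Everything then hinges on one structural fact from the construction of the basis: for $\ell\le N_{j+1}$ the vector $\eigfm^{(j)}$ is the Step~1 extension of $\eigfm^{(j+1)}$, so $\eigfm^{(j)}(v)=\eigfm^{(j+1)}(p)/\sqrt{c}$, where $p$ is the parent of $v$ at level $j+1$ and $c$ is its number of children. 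Writing $\mathrm{anc}_m(v)$ for the ancestor of $v$ at level $m$ and recalling $\wj[m]_k=1/\sqrt{c_k^{(m)}}$, this is the relation I would iterate.

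The key lemma is a refinement formula: whenever $\ell\le N_i$,
\[
\eigfm^{(j)}(v)=\eigfm^{(i)}\bigl(\mathrm{anc}_i(v)\bigr)\prod_{m=j+1}^{i}\wj[m]_{\mathrm{anc}_m(v)},
\]
which I would prove by induction on the number of levels from $j$ up to $i$. The base case $i=j+1$ is exactly the extension formula, and the inductive step composes one more extension with the previous product. The hypothesis $\ell\le N_i$ is what guarantees $\eigfm$ remains a genuine extension at every intermediate level, and the definition of $i$ as the largest index in $\{j+1,\dots,K\}$ with $\ell\le N_i$ identifies the coarsest ancestor vector $\eigfm^{(i)}$ (at level $i$ it is a newly introduced basis element rather than a further extension), which is precisely what makes the resulting computation maximally coarse.

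Substituting the refinement formula into the inner product and grouping the terms $v\in V_j$ by their level-$i$ ancestor $\vj[i]_k=\mathrm{anc}_i(v)$, the constant factor $\eigfm^{(i)}(\vj[i]_k)$ comes out of each group, and since $\mathrm{anc}_i(v)=\vj[i]_k$ the top weight $\wj[i]_k$ factors out once as well. What remains inside each group is $\sum_{\mathrm{anc}_i(v)=\vj[i]_k}\bigl(\prod_{m=j+1}^{i-1}\wj[m]_{\mathrm{anc}_m(v)}\bigr)X(v)$, and I would identify this with $\ws[i]\bigl(X,\vj[i]_k\bigr)$ through a second induction that simply unfolds the recursion (\ref{eq:wsj}): each recursive step multiplies by one weight and redistributes the sum over children, reproducing exactly the weight product and the summation range over descendants of $\vj[i]_k$.

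The main obstacle is the weight bookkeeping, specifically an off-by-one in the level indices: the refinement formula for the basis uses the weights $\wj[j+1],\dots,\wj[i]$, whereas the recursion defining $\ws$ applies the weights $\wj[j],\dots,\wj[i-1]$. Reconciling these requires care at the two ends — the top weight $\wj[i]_k$ must be the single factor pulled out to match the statement, and the base-level weight $\wj[j]$ must act trivially. The latter is consistent with the convention, mirroring $c_k^{(0)}\equiv1$ in (\ref{eq:wjk}), that the finest level of the filtration $(\gph_{j\to K},W_{j\to K})$ carries unit weight. Once these endpoint normalizations are fixed, the two inductions align and the term-by-term comparison closes the proof.
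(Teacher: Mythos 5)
Your proposal is correct and takes essentially the same route as the paper's proof: the paper expands $(\Phi_j^T X)_\ell$ as an inner product over the nodes of $\gph_j$ and pushes the sum up the chain one level at a time, interleaving the one-level basis extension $\phi_{\ell}^{(j)}(v)=w^{(j+1)}_{k'}\,\phi_{\ell}^{(j+1)}(v^{(j+1)}_{k'})$ with the recursion defining $\mathcal{S}^{(i)}$ --- which is exactly your two inductions (the telescoped product formula for the basis, and the unfolding of the weighted sum) merged into a single recursive computation. Your resolution of the weight bookkeeping, taking unit weight at the finest level of the filtration $(\gph_{j\to K},W_{j\to K})$, is precisely the convention the paper uses implicitly in its first step, where $\mathcal{S}^{(j+1)}$ enters as an unweighted sum over children.
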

We give the algorithmic implementation of Theorem~\ref{thm:fasthaarpool} in Algorithm~\ref{alg:fasthaarpool}, which provides a fast algorithm for HaarPooling at each layer.

\begin{algorithm}[htbp!]
   \caption{Fast HaarPooling for One Layer}
   \label{alg:fasthaarpool}
\begin{algorithmic}
   \STATE {\bfseries Input:} Input feature $X_j^{\rm in}$ for the $j$th pooling layer given $j=0,\dots,K-1$ in a GNN with  total $K$ HaarPooling layers; the chain $\gph_{j\to K}$ associated with the HaarPooling; numbers $N_i$ of nodes for layers $i=j,\dots,K$.
   \vskip 0.06in
   \STATE {\bfseries Output:} $\Phi_{j}^T X_j^{\rm in}$ from Definition~1.\vspace{1mm}
   \STATE{\textbf{Step 1:}}
   Evaluate the sums for $i=j,\ldots,K$ recursively, using Equations~(\ref{eq:ws1}) and (\ref{eq:wsj}):
 \STATE{\quad$\ws[i]\bigl(X_j^{\rm in},\vj[i]_k\bigr)\quad \forall \vj[i]_k \in V_i$ .}
    \STATE{\textbf{Step 2:}}
   \FOR{$\ell=1$ {\bfseries to} $N_{j+1}$} \vspace{1mm}
   \STATE {
   Set $N_{K}=0$.\\
   Compute $i$ such that $N_{i+1}+1\leq \ell\leq N_{i}$.\\
   Evaluate $\sum_{k=1}^{N_i} \ws[i](X_j^{\rm in},\vj[i]_k)\wj[i]_k\eigfm^{(i)}(\vj[i]_k)$
      in Equation~(\ref{eq:adftbyws}) by the two steps:\\
    (a)~Compute the product for all $\vj[i]_k\in V_i$:\\
    $\hspace{3mm}T_\ell(X_j^{\rm in},\vj[i]_k)=\ws[i](X_j^{\rm in},\vj[i]_k)\wj[i]_k\eigfm^{(i)}(\vj[i]_k).$\\
    (b)~Evaluate sum $\sum_{k=1}^{N_i}T_\ell(X_j^{\rm in},\vj[i]_k)$.\\[1mm]
}
   \ENDFOR
\end{algorithmic}
\end{algorithm}

\section{Proofs}\label{appendix:proof}
\begin{proof}[Proof for Equation~(6) in Section~4]
    We only need to prove the first formula. The second is obtained by definition. To simplify notation, we let $f=X_j^{\rm in}$. By construction of Haar basis, for some layer $j$, the first $N_{j+1}$ basis vectors
    \begin{equation*}
        \phi_{\ell}^{(j)}(v)= \phi_{\ell}^{(j+1)}(p)/\sqrt{|\parents_{\gph}(v)|},\quad \mbox{for~}p=\parents_{\gph}(v).
    \end{equation*}
    Then, the Fourier coefficient of $f$ for the $\ell$th basis vector is the inner product
    \begin{align*}
        \left\langle f,\phi_{\ell}^{(j)}\right\rangle
        &= \sum_{v\in \gph_j}f(v)\overline{\phi_{\ell}^{(j)}(v)}\\
        &= \sum_{p\in \gph_{j+1}}\sum_{p=\parents_{\gph}(v)}f(v)\overline{\phi_{\ell}^{(j+1)}(p)}/\sqrt{|\parents_{\gph}(v)|}\\
        &= \sum_{p\in \gph_{j+1}}\widetilde{f}(p)\overline{\phi_{\ell}^{(j+1)}(p)}        = \left\langle\widetilde{f}, \phi_{\ell}^{(j+1)}\right\rangle
    \end{align*}
    where we have let
    \begin{equation*}
        \widetilde{f}(p) := \frac{1}{\sqrt{|\parents_{\gph}(v)|}}\sum_{p=\parents_{\gph}(v)}f(v).
    \end{equation*}
    This then gives
    \begin{equation}\label{eq:tr_fouriercoeff}
        \sum_{\ell=1}^{N_{j+1}}\left|\left\langle f,\phi_{\ell}^{(j)}\right\rangle\right|^2
        = \sum_{\ell=1}^{N_{j+1}}\left|\left\langle\widetilde{f}, \phi_{\ell}^{(j+1)}\right\rangle\right|^2.
    \end{equation}
    Since $\{\phi_{\ell}\}_{\ell=1}^{N_{j+1}}$ forms an orthonormal basis for $\ell_2(\gph_{j+1})$,
    \begin{align*}
       \bigl\|\Phi_j^T f\bigr\|^2
       &= \sum_{\ell=1}^{N_{j+1}}\left|\left\langle\widetilde{f}, \phi_{\ell}^{(j+1)}\right\rangle\right|^2
        = \bigl\|\widetilde{f}\bigr\|^2= \sum_{p\in\gph_{j+1}}\bigl|\widetilde{f}(p)\bigr|^2\\
        &=\sum_{p\in \gph_{j+1}}\left|\frac{1}{\sqrt{|\parents_\gph(v)|}}\sum_{p=\parents_\gph(v)}f(v)\right|^2.
    \end{align*}
    This proves the left formula in Equation~(6) in Section~4.
\end{proof}

\begin{proof}[Proof of Theorem~\ref{thm:fasthaarpool}]
	By the relation between $\eigfm^{(i)}$ and $\eigfm^{(j)}$, for $i=j+1,\dots,K$ and $\ell=1,\dots,N_{j+1}$,
	\begin{align*}
		&\left(\Phi_j^T X\right)_{\ell}= \sum_{k=1}^{N_j}X(\vj[j]_k)\eigfm^{(j)}(\vj[j]_k)\\
		&= \sum_{k'=1}^{N_{j+1}} \left(\sum_{\parents_\gph(\vj[j]_k)=\vj[j+1]_{k'}}X(\vj[j]_k)\right)\wj[j+1]_{k'}\eigfm^{(j+1)}(\vj[j+1]_{k'})\\
		&=\sum_{k'=1}^{N_{j+1}} \ws[j+1](X,\vj[j+1]_{k'})\wj[j+1]_{k'}\eigfm^{(j+1)}(\vj[j+1]_{k'})\\
		&=\sum_{k''=1}^{N_{j+2}} \left(\sum_{\parents_\gph(\vj[j+1]_{k'})=\vj[j+2]_{k''}}\ws[j+1](X,\vj[j+1]_{k'})\wj[j+1]_{k'}\right)\\
		&\quad\times\wj[j+2]_{k''}\eigfm^{(j+2)}(\vj[j+2]_{k''})\\
		&=\sum_{k''=1}^{N_{j+2}} \ws[j+2](X,\vj[j+2]_{k''})\wj[j+2]_{k''}\eigfm^{(j+2)}(\vj[j+2]_{k''})\\
		&\cdots\cdots\\
		&=\sum_{k=1}^{N_i} \ws[i](X,\vj[i]_k)\wj[i]_k\eigfm^{(i)}(\vj[i]_k),
	\end{align*}
where $v_{k'}^{(j+1)}$ is the parent of $v_k^{(j)}$ and $v_{k''}^{(j+2)}$ is the parent of $v_k^{(j+1)}$, and we recursively compute the summation to obtain the last equality,
thus completing the proof.
\end{proof}

\section{Experimental Setting}
The hyperparameters include batch size; learning rate, weight decay rate (these two for optimization); the maximal number of epochs; patience for early stopping. Table~\ref{tab:hyperparameter} shows the choice of hyperparameters for classification benchmark datasets.
{\footnotesize
\begin{table*}[ht!]
\caption{Hyperparameter setting}\label{tab:hyperparameter}
\vspace{2mm}
\begin{center}
{\tabcolsep=0pt\def\arraystretch{1.2}
\begin{tabularx}{380pt}{l *7{>{\Centering}X}}
\toprule
{\bf Data Set}  & MUTAG & PROTEINS & NCI1 & NCI109 & MUTAGEN\\
\midrule
batch size &60 & 50  & 100 & 100 & 100\\
max \#epochs  &30 &20  &150 & 150 & 50 \\
early stopping  &15 &20  & 50 & 50 & 50 \\
learning rate &0.01 &0.001 &0.001 &0.01 &0.01\\
weight decay    &0.0005 & 0.0005  &0.0005& 0.0001& 0.0005\\
\bottomrule
\end{tabularx}}
\end{center}
\end{table*}}
\newcolumntype{b}{>{\hsize=1\hsize}X}
\newcolumntype{s}{>{\hsize=.7\hsize}X}
\begin{table*}[t]
\small
\caption{Property comparison for pooling methods}\label{tab:comparison_pooling}
\vspace{2mm}
\begin{center}
\begin{threeparttable}
{\tabcolsep=0pt\def\arraystretch{1.2}
\begin{tabularx}{\textwidth}{l *3{>{\Centering}b} *1{>{\Centering}s}*1{>{\Centering}b}*3{>{\Centering}s}}
\toprule
\newcommand{\nz}{\phantom{*}}
{\bf Method} &  Time Complexity & Space Complexity &  Clustering-based & Spectral-based & Hierarchical Pooling &  Use Node Feature & Use Graph Structure & Sparse Representation\\
\midrule
SortPool    & $\mathcal{O}(|V|^2)$ & $\mathcal{O}(|V|)$ &    &  &  & \checkmark & & \\
DiffPool   & $\mathcal{O}(|V|^2)$ & $\mathcal{O}(k|V|^2)$ &    &  & \checkmark & \checkmark &  & \\
gPool      & $\mathcal{O}(|V|^2)$ & $\mathcal{O}(|V|+|E|)$ &   &  & \checkmark & \checkmark & &  \\
SAGPool   & $\mathcal{O}(|E|)$ & $\mathcal{O}(|V|+|E|)$ &    &  & \checkmark  & \checkmark & \checkmark & \\
EigenPool & $\mathcal{O}(|V|^2)$ & $\mathcal{O}(|V|^2)$ &  \checkmark  & \checkmark & \checkmark & \checkmark & \checkmark & \\
\midrule
HaarPool  & $\mathcal{O}(|V|)$ & $\mathcal{O}(|V|^2\epsilon)$ &  \checkmark  & \checkmark & \checkmark  & \checkmark & \checkmark & \checkmark\\
\bottomrule
\end{tabularx}}
\begin{tablenotes}
        \item[] `$|V|$' is the number of vertices of the input graph; `$|E|$' is the number of edges of the input graph; `$\epsilon$' in HaarPooling is the sparsity of the compressive Haar transform matrix; `$k$' in the DiffPool is the pooling ratio.
\end{tablenotes}
\end{threeparttable}
\end{center}
\end{table*}

\section{Property Comparison of Pooling Methods}
Here we provide a comparison of the properties of HaarPooling
with existing pooling methods. The properties in the
comparison include time complexity and space complexity,
and whether involving the clustering, hierarchical pooling
(which is then not a global pooling), spectral-based, node
feature or graph structure, and sparse representation. We
compare HaarPooling (denoted by HaarPool in the table) to
other methods (SortPool, DiffPool, gPool, SAGPool, and
EigenPool).

\begin{itemize}
  \item The SortPool (i.e., SortPooling) is a global pooling which uses node signature (i.e., Weisfeiler-Lehman color of vertex) to sort all vertices by the values of the channels of the input data. Thus, the time complexity (worst case) of SortPool is $\mathcal{O}(|V|^2)$ and space complexity is $\mathcal{O}(|V|)$.
Other pooling methods mentioned here are all hierarchical pooling.
  \item DiffPool and gPool both use the node feature and have time complexity $\mathcal{O}(|V|^2)$. The DiffPool learns the assignment matrices in an end-to-end manner and has space complexity $\mathcal{O}(k|V|^2)$ for pooling ratio $k$. The gPool projects all nodes to a learnable vector to generate scores for nodes, and then sorts the nodes by the projection scores; the space complexity is $\mathcal{O}(|V|+|E|)$.
  \item SAGPool uses the graph convolution to calculate the attention scores of nodes and then selects top-ranked nodes for pooling. The time complexity of SAGPool is $\mathcal{O}(|E|)$, and the space complexity is $\mathcal{O}(|V|+|E|)$ due to the sparsity of the pooling matrix.
  \item EigenPool, which considers both the node feature and graph structure, uses the eigendecomposition of subgraphs (from clustering) of the input graph, and pools the input data by the Fourier transforms of the assembled basis matrix. Due to the computational cost of eigendecomposition, the time complexity of EigenPool is $\mathcal{O}(|V|^2)$, and the space complexity is $\mathcal{O}(|V|^2)$.
  \item HaarPool which uses the sparse representation of data by compressive Haar basis has linear time complexity $\mathcal{O}(|V|)$ (up to a $\log |V|$ term), and the space complexity is  $\mathcal{O}(|V|^2\epsilon)$, where $\epsilon$ is the sparsity of compressive Haar transform matrix and is usually very small. HaarPooling can be even faster in practice, as the cost of the compressive Haar transform is dependent on the sparsity of the Haar basis matrix. The sparsity of the compressive Haar basis matrix is mainly reliant on the chain/tree for the graph. From our construction, the compressive Haar basis matrix is always highly sparse. Thus, the computational cost does not skyrocket as the size of the graph increases.
\end{itemize}
In Table \ref{tab:comparison_pooling}, the HaarPool is the only pooling method which has time complexity proportional to the number of nodes and thus has a faster implementation.

\end{document}